\documentclass[letterpaper]{article} 
\usepackage{aaai2026}  
\usepackage{times}  
\usepackage{helvet}  
\usepackage{courier}  
\usepackage[hyphens]{url}  
\usepackage{graphicx} 
\urlstyle{rm} 
\usepackage{natbib}  
\usepackage{caption} 
\usepackage{float} 
\frenchspacing  
\setlength{\pdfpagewidth}{8.5in}  
\setlength{\pdfpageheight}{11in}  
\usepackage{microtype}
\usepackage{amsmath} 
\usepackage{booktabs} 
\usepackage{xcolor} 
\usepackage{amsthm} 
\usepackage[linesnumbered,ruled,vlined]{algorithm2e}
\usepackage{subfig} 
%

%
\usepackage{newfloat}
\usepackage{listings}
\DeclareCaptionStyle{ruled}{labelfont=normalfont,labelsep=colon,strut=off} 
\lstset{%
	basicstyle={\footnotesize\ttfamily},
	numbers=left,numberstyle=\footnotesize,xleftmargin=2em,
	aboveskip=0pt,belowskip=0pt,%
	showstringspaces=false,tabsize=2,breaklines=true}
\floatstyle{ruled}
\newfloat{listing}{tb}{lst}{}
\floatname{listing}{Listing}
\pdfinfo{
/TemplateVersion (2026.1)
}


\newtheorem{theorem}{Theorem}
%
\newtheorem{corollary}[theorem]{Corollary}%

\newtheorem{thm}{\protect\theoremname}
\newtheorem{defn}[thm]{\protect\definitionname}
\newtheorem{prop}[thm]{\protect\propositionname}
\newtheorem{lem}[thm]{\protect\lemmaname}

\providecommand{\claimname}{Claim}
\providecommand{\definitionname}{Definition}
\providecommand{\lemmaname}{Lemma}
\providecommand{\propositionname}{Proposition}
\providecommand{\theoremname}{Theorem}
\newcommand{\descr}[1]{\noindent \textbf{#1}}

\setcounter{secnumdepth}{2} 

%


\title{Approximation Algorithm for Constrained $k$-Center Clustering:\\ A Local Search Approach}
\author{
    Chaoqi Jia\textsuperscript{\rm 1},
    Longkun Guo\textsuperscript{\rm 2,3}\thanks{Corresponding Author},
    Kewen Liao\textsuperscript{\rm 4},
    Zhigang Lu\textsuperscript{\rm 5},
    Chao Chen\textsuperscript{\rm 1},
    Jason Xue\textsuperscript{\rm 6}
}
\affiliations{
    \textsuperscript{\rm 1} School of Accounting, Information Systems and Supply Chain, RMIT University, Melbourne, Australia\\
    \textsuperscript{\rm 2} School of Mathematics and Statistics, Fuzhou University, China\\
    \textsuperscript{\rm 3} School of Integrated Circuits, Guangdong University of Technology, China\\
    \textsuperscript{\rm 4} School of Information Technology, Deakin University, Australia\\
    \textsuperscript{\rm 5} Western Sydney University, Australia\\
    \textsuperscript{\rm 6} CSIRO's Data61 and Responsible AI Research (RAIR) Centre, Adelaide University, Australia\\
    
    chaoqi.jia@student.rmit.edu.au, longkun.guo@gmail.com, kewen.liao@deakin.edu.au,
    z.lu@westernsydney.edu.au, chao.chen@rmit.edu.au, jason.xue@data61.csiro.au
%
}


\usepackage{bibentry}

\begin{document}
\maketitle
\begin{abstract}
Clustering is a long-standing research problem and a fundamental tool in AI and data analysis. The traditional $k$-center problem, known as a fundamental theoretical challenge in clustering, has a best possible approximation ratio of $2$, and any improvement to a ratio of $2 - \epsilon$ would imply $\mathcal{P} = \mathcal{NP}$.
In this work, we study the constrained $k$-center clustering problem, where instance-level cannot-link (CL) and must-link (ML) constraints are incorporated as background knowledge. Although general CL constraints significantly increase the hardness of approximation, previous work has shown that disjoint CL sets permit constant-factor approximations. However, whether local search can achieve such a guarantee in this setting remains an open question. To this end, we propose a novel local search framework based on a transformation to a \textit{dominating matching set} problem, achieving the best possible approximation ratio of $2$. The experimental results on both real-world and synthetic datasets demonstrate that our algorithm outperforms baselines in solution quality. 
\end{abstract}

\begin{links}
    \link{Code}{https://github.com/ChaoqiJia/LSCKC}
\end{links}

\section{Introduction}
Clustering is a fundamental task in unsupervised machine learning, and it is often employed in semi-supervised learning applications through background knowledge~\cite{van2020survey}. It is widely used in applications such as online customer segmentation~\cite{10chen2012data} and recommendation systems~\cite{27sarwar2002recommender}. Typical clustering problems include $k$-means~\cite{bidaurrazaga2021k}, $k$-center~\cite{alipour2020approximation} and facility location~\cite{garimella2015scalable}, which are commonly $\mathcal{NP}$-hard but with broad applications. In particular, we consider the general metric $k$-center problem where the best possible approximation ratio is well known to be 2 by~\citet{hsu1979easy}. The problem aims to minimize the maximum distance between any data point and its closest center. This objective can be alternatively considered as minimizing the (maximum) covering radius of the clusters, with application scenarios like minimizing the furthest traveling distance in trip planning or the largest dissimilarity in document summarization. Several decades ago, two independent works~\cite{gonzalez1985clustering,hochbaum1985best} both achieved a $2$-approximation on $k$-center clustering using different approaches. 

Furthermore, in reality, relationships existing among entities can be partially derived from some accessible background knowledge in a semi-supervised setting, such as similar entities, disjoint entities, or more concretely, the available entity class labels. The relationships are generally represented as instance-level cannot-link (CL) and must-link (ML) constraints, which can be leveraged as a form of regularization to obtain better clusters. Clustering with instance-level constraints was first introduced by~\citet{wagstaff2000clustering}, where both CL and ML constraints between each pair of data nodes were considered, i.e., nodes with an ML constraint must belong to the same cluster, whereas nodes with a CL constraint must be placed into different clusters. When using clustering to directly infer the classification of data~\cite{basu2008constrained}, both CL and ML constraints that express side information about the underlying class structure can greatly boost the classification performance. In semi-supervised learning, labeling unlabeled data via clustering with CL and ML constraints can help labels propagate more effectively~\cite{li2009constrained}. This paper focuses on the problem of $k$-center clustering with CL and ML constraints, for which we design a local search algorithm that achieves the best possible $2$-approximation.

\paragraph{Related Work}
Instance-level constraints such as CL and ML have been widely adopted in different clustering problems such as $k$-means clustering~\cite{wagstaff2001constrained}, spectral clustering~\cite{coleman2008spectral}, and hierarchical clustering~\cite{davidson2009using}. In addition, \citet{basu2008constrained} had collated an extensive list of constrained clustering problems. CL constraints were known to be problematic to clustering as their inclusion could lead to a computationally intractable feasibility problem~\cite{davidson2007complexity}, i.e., no efficient approximation algorithm can exist for finding any clustering result satisfying all CL constraints unless $\mathcal{P} = \mathcal{NP}$. This inapproximability result is obtained via a reduction from the $k$-coloring problem and has hindered the development of approximation algorithms for constrained clustering. Despite the negative theoretical result, CL constraints are widely useful for modeling or resolving conflicting relationships.
For instance, \citet{davidson2010sat} were among the first to study the  $k$-center clustering problem with instance-level constraints, focusing on the case $k = 2$. They incorporated an SAT-based framework to obtain an approximation scheme with $(1 + \epsilon)$-approximation for this extreme case. After that, \citet{brubach2021fairness} studied $k$-center only with ML constraints and achieved a $2$-approximation. Recently, \citet{guo2024efficient} leveraged the structure of disjoint CL sets and then the construction of a reverse dominating set (RDS) to result in a constant factor approximation ratio of $2$ that is the best possible.

In contrast with their works, we propose a different technique, called \textit{dominating matching set} (DMS), based on local search to solve this problem. In particular, local search is an effective approach for solving many optimization problems \cite{arya2001local,kanungo2002local,cohen2022improved,feng2019improved}, though it is often non-trivial to provide approximation guarantees.  By the local search algorithm for the CL $k$-center clustering, our method achieves the best $2$-approximation ratio unless $\mathcal{P} = \mathcal{NP}$ and provides better cost for this problem in the experiments.

\paragraph{Our Results and Contribution}
In this paper, we mainly deal with the troublesome CL constraints for the general value of $k$, i.e., the CL $k$-center problem. Following the constraint types by~\citet{guo2025near}, we consider approximating $k$-center with both disjoint CL and ML constraints. We develop a unique local search framework by first assuming that the radius of an optimal solution to the constrained (CL/ML) $k$-center is known. With the optimal radius, the CL set constraints can be transformed into an auxiliary $l$-partite graph, and the $k$-centers are to be locally searched through a structure called \textit{single-swap free dominating matching set} (SF-DMS). Our local search algorithm first selects a feasible solution to CL $k$-center, and then repeatedly finds a single swap therein to improve the solution until no such single swap exists. In the approximation ratio analysis, we bound the number of remaining centers by $k$ when the local search terminates. The key idea is to show that the centers actually compose an SF-DMS, for which the radius is at most twice the optimal. We also observe that it is $\mathcal{NP}$-hard to compute a \textit{minimum} DMS, whereas an SF-DMS can be computed in polynomial time and leads to a desirable solution to the problem with a provable performance guarantee. Our main contribution can be summarized as follows: 
\begin{itemize}
\item Propose a threshold-based local search algorithm for $k$-center with CL/ML constraints by novelly transforming a CL $k$-center problem into a \textit{dominating matching set} (DMS) in an auxiliary $l$-partite graph.

\item Show that the best possible 2-approximation can be achieved for the constrained $k$-center problem with disjoint CL constraints by introducing a structure called the single-swap free DMS with local search, which enables the computation of a desirable solution in polynomial time. 

\item Conduct extensive experiments on both real-world and synthetic datasets to demonstrate the algorithm's superior performance in practice in terms of the clustering cost and practical approximation ratio. The results show that our local-search-based approximation algorithm consistently outperforms the baselines under both disjoint and intersected CL constraints. 
\end{itemize}

\section{Preliminary}
\label{Sec:Pre}

Given an input dataset $P =\{ p_{1},\dots,p_{n}\}$ with a collection of $n$ data points, we first denote the distance function between any two data points $p_i$ and $p_j$ as $d(p_i, p_j)$. Then the distance between a point $p \in P$ and the center set $C$ can be defined as $d(p,\,C)=\min_{c\in C}d(p, c)$, where the distance is $ d(p, \emptyset) = +\infty$ while the set $C$ is empty. The classical (discrete) $k$-center problem aims to select a center set $C \subseteq P$ such that the maximum metric distance between $P$ and $C$ is minimized. Formally, the objective of the traditional $k$-center clustering is to seek an optimal center set $C^* = \arg\min_{C \subseteq P}\max_{p\in P}d(p,\,C)$ with a minimum radius. For brevity, we denote the optimal radius by $opt$. 

\paragraph{Constrained (CL/ML) $k$-Center}
With the same optimization objective as the traditional $k$-center, the constrained $k$-center problem differs in that its input data points are subject to predefined CL and ML constraints. We introduce necessary definitions and notations for formulating the constrained (CL/ML) $k$-center following \citet{guo2024efficient}. Formally, we set cannot‐link (CL) constraints $\mathcal{Y} = \{Y_1,\dots,Y_l\}$, where each $Y_i \subseteq P$ and $|Y_i|\le k$ is a set of points that cannot be assigned to the same cluster. Similarly, must‐link (ML) constraints are given as a collection of sets $\mathcal{X} = \{X_1,\dots,X_h\}$, where each $X_i\subseteq P$ is a clique of points that must all lie in the same cluster. Let $C(p)$ denote the cluster (center) a point $p$ is assigned to; then the CL and ML constraints can be respectively translated into the conditions that all $Y_{i} \in \mathcal{Y}$ satisfy $\forall (p,q) \in Y_{i}: C(p)\ne C(q)$ and all $X_{i} \in \mathcal{X}$ satisfy $\forall (p,q) \in X_{i}: C(p)=C(q)$. In addition, by construction, CL sets can be either mutually disjoint or not, implying dramatically different approximation analysis. The disjoint CL setting can be represented as $\forall Y_{i},\,Y_{j}\in\mathcal{Y}:Y_{i}\cap Y_{j}=\emptyset$. Note that even some intersected CLs can be reduced. For instance, if there are CLs $\{u,v\}$ and $\{v,w\}$ and an ML $\{u,w\}$, it can be reduced into a single CL $\{v,z\}$ with $z$ being a merged node of $u$ and $w$. On the other hand, ML sets are naturally mutually disjoint, as otherwise any two intersected ML sets can be merged into a single ML set due to transitivity. In this paper, we use CL/ML $k$-center to denote the $k$-center problem with CL/ML constraints.

\begin{lem}\label{lem:ML}\cite{guo2025near}
ML $k$-center has a 2-approximation ratio.
\end{lem}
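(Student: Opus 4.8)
The plan is to adapt the classical Hochbaum--Shmoys-style threshold argument for $k$-center so that must-link groups are never split. Since ML sets are pairwise disjoint, the ML sets together with the points lying in no ML set form a partition of $P$ into groups $G_1,\dots,G_m$, and any feasible clustering must assign each $G_j$ entirely to a single center. Hence it suffices to show the following: after guessing $\rho=opt$ (by trying all $O(n^2)$ pairwise distances as candidates), one can in polynomial time find a set $C\subseteq P$ with $|C|\le k$ such that every group $G_j$ is contained in a ball $B(c,2\rho)$ for some $c\in C$. Assigning each group to such a center then automatically respects ML and yields a clustering of radius at most $2\,opt$.

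The algorithm I would use: process $G_1,\dots,G_m$ in an arbitrary order, starting with $C=\emptyset$; whenever the current group $G_j$ is not yet covered, i.e.\ no $c\in C$ satisfies $G_j\subseteq B(c,2\rho)$, add to $C$ an arbitrary point of $G_j$. For the analysis, fix an optimal solution of radius $opt$ with clusters $A_1,\dots,A_{k'}$, $k'\le k$; each $A_t$ has diameter at most $2\,opt$, and because ML holds for the optimum too, each group lies in exactly one $A_t$. Two things then need to be checked. First, termination with full coverage: a center just opened inside $G_j$ already covers $G_j$ since $\operatorname{diam}(G_j)\le\operatorname{diam}(A_t)\le 2\,opt=2\rho$, so once a group is processed it is covered and stays covered. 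Second, the cardinality bound: no two opened centers lie in the same $A_t$, for if $c_a\in G_{j_a}\subseteq A_t$ was opened earlier and later some group $G_{j_b}\subseteq A_t$ is reached, then every point of $G_{j_b}$ is within $\operatorname{diam}(A_t)\le 2\rho$ of $c_a$, so $G_{j_b}$ is already covered and no new center is opened; therefore $|C|\le k'\le k$.

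The step I expect to be the real obstacle is squeezing the factor down to exactly $2$. The tempting shortcuts---running an unconstrained $2$-approximate $k$-center and then re-merging each split group onto a common best center, or, in the greedy above, opening \emph{any} point that covers the uncovered group within radius $\rho$ instead of a point of that group---each lose an extra additive $\rho$ (or $2\rho$), because the opened center can sit that far from the true optimal center of its cluster, giving only a $3$- or $4$-approximation. The idea that rescues the bound is to open a center \emph{inside} the uncovered group: then the single inequality $\operatorname{diam}(A_t)\le 2\,opt$ does double duty, certifying that the new center covers its own group and simultaneously preventing a second opening anywhere in the same optimal cluster. With that in place the remaining ingredients are routine: there are only $O(n^2)$ guesses of $opt$ to try, each coverage test $G_j\subseteq B(c,2\rho)$ is polynomial, and for the correct guess $\rho=opt$ the output is a feasible ML clustering using at most $k$ centers with radius at most $2\,opt$.
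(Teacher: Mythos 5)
Your proposal is correct and follows essentially the same route as the paper: a threshold-based greedy with $\eta=2\,opt$ guessed from the $O(n^2)$ pairwise distances, treating each ML set as an indivisible unit, and bounding $|C|\le k$ by the argument (the paper's Lem.~\ref{lem:ML-num}) that every opened center lies in a distinct optimal cluster because each optimal cluster has diameter at most $2\,opt$. The only cosmetic difference is your trigger condition (the whole group must fit in a single ball $B(c,2\rho)$) versus Alg.~\ref{alg:ML}'s $\max_{x\in X}d(x,C)>\eta$ test, and your version in fact makes the ML-feasibility of the final assignment slightly more explicit.
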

  \begin{algorithm}[t]
    \small
     \SetAlgorithmName{Algorithm}{Algorithm}{List of Algorithms}
        \SetAlCapFnt{\normalfont}
     \caption{{Approximation algorithm of CL/ML $k$-center. (\citeauthor{guo2025near})}}
     \label{alg:ML}
      \KwIn{Database $P$ of size $n$ with ML sets ${\mathcal{X}}$ and CL sets ${\mathcal{Y}}$ a given threshold $\eta = 2opt$, and a positive integer $k$.}
      \KwOut{Center set $C$ with radius bounded by $\eta$.}
         Set $C\leftarrow Y$, which is the CL set of maximum size in $\mathcal{Y}$\;
         \For{each point $p\in P$}{ 
         Add $p$ to the center set $C$ if it satisfies one of the following two cases:\\
        \begin{itemize}
        \item [ a)] $p\in P\setminus \bigcup_{X\in \mathcal{X}} X$ and $d(p,C)>\eta$\;   
        \item [ b)] There exists $\max_{x\in X}d(x,C) > \eta$,\\
        where $X$ is an ML set that contains $p$;
        \end{itemize}
        }
    Return $C$.
\end{algorithm}

\section{Approximations for CL $k$-Center}
\label{sec:appro_cl_kcenter}
In this section, we first deduce the properties of the threshold-based algorithm for $k$-center with ML constraints~\cite{guo2025near} to serve our analysis. Then, regarding the same threshold, we introduce an auxiliary graph called an $l$-partite graph to represent the relationship between the CL sets and the current centers, and then define a structure called \textit{dominating matching set} (DMS) therein. Based on this structure, we further aim to find a solution with an approximation ratio of $2$ by incorporating local search techniques into the threshold-based algorithmic framework.

For simplicity of the algorithmic description, we first assume that the objective value of an optimal solution is known as $opt$, and use $2opt$ as the threshold. Then, we show that the ratio $2$ guarantee still holds even without knowing $opt$.

\begin{lem}\label{lem:ML-num}
Let $\mathcal{U}=\left\{U_{1},\dots, U_{k}\right\}$ be the set of clusters of an optimal solution wrt $opt$. The centers of  $C$ produced by Alg.~\ref{alg:ML} must belong to different clusters of $\,\mathcal{U}$, when $\eta = 2opt$.
\end{lem}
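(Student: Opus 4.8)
The plan is to argue by contradiction: suppose two centers $c, c' \in C$ lie in the same optimal cluster $U_j \in \mathcal{U}$. Since every point of $U_j$ is within distance $opt$ of the optimal center of $U_j$, the triangle inequality gives $d(c, c') \le 2opt = \eta$. I would then trace through Algorithm~\ref{alg:ML} to see how $c$ and $c'$ could both have been added to $C$, and show that in each case the distance condition that triggered the second insertion is violated by the presence of the first center, yielding the contradiction.

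First I would fix notation: assume without loss of generality that $c$ was added to $C$ no later than $c'$ during the execution (the initial CL set $Y$ can be thought of as added ``first''). At the moment $c'$ is considered, $c$ is already present in $C$, so $d(c', C) \le d(c', c) \le \eta$. Now I would do a case analysis on which of the two insertion rules caused $c'$ to be added. In case (a), $c'$ was added only because $d(c', C) > \eta$, which directly contradicts $d(c', C) \le \eta$. In case (b), $c'$ belongs to an ML set $X$ and was added because $\max_{x \in X} d(x, C) > \eta$; here I need to be a little more careful, because the witness $x$ achieving the max need not be $c'$ itself. However, by the ML constraint all points of $X$ lie in the same optimal cluster as $c'$, hence in $U_j$, so each such $x$ satisfies $d(x, c) \le 2opt = \eta$ by the same triangle-inequality argument, giving $\max_{x\in X} d(x,C) \le \eta$ — again a contradiction. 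I should also handle the subcase where $c$ itself is the initial CL set member and $c'$ comes from the same $U_j$; the argument is identical since $c \in C$ from the start.

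The one genuinely delicate point, and the step I expect to be the main obstacle, is the interaction with the must-link sets: a center added by rule (b) is the \emph{entire} ML set $X$ (or a representative thereof) rather than a single point, so I must be careful about what ``$c' \in U_j$'' means and ensure the triangle-inequality bound applies uniformly to every point that rule (b) forces into $C$. Once it is established that all of $X$ lies in a single optimal cluster (which follows from feasibility of the optimal solution with respect to the ML constraints), the bound $d(x, c) \le 2opt$ holds for every $x \in X$, and the witness-point subtlety dissolves. A secondary point worth stating explicitly is that the optimal solution $\mathcal{U}$ is assumed feasible for the CL/ML constraints, so each $U_j$ contains at most one point from any CL set and all of each ML set — this is what lets us simultaneously invoke ``same cluster $\Rightarrow$ distance $\le 2opt$'' and not worry about $c, c'$ being forced apart by a CL constraint. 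With these observations in place, every case collapses to a contradiction, so no two centers of $C$ share an optimal cluster.
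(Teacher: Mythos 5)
Your proposal is correct and is essentially the paper's argument in contrapositive form: the paper proves by induction over iterations that each newly added point lies in an uncovered optimal cluster, using exactly your two cases (rule (a) versus rule (b)), the $2opt$ diameter bound from the triangle inequality, and the feasibility of the optimal clustering with respect to the CL/ML constraints. Your explicit handling of the witness point in the ML case and of two initial CL-set points sharing a cluster are the same facts the paper uses, just stated more carefully.
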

\begin{proof}
For the first iteration of Alg.~\ref{alg:ML} (in Step 1), the lemma is obviously true since the center set $C$ is a CL set, which limits the points to belong to different clusters. Suppose the lemma holds for the center set $C^i$ of the first $i$ iterations; we only need to demonstrate the lemma for the point $p$ that is added $C^{i+1}$ in the $(i+1)$th iteration by analyzing the following two cases:
\begin{enumerate}
\item[(a)]  If $p\notin X$, for  $\forall X\in \mathcal{X}$:

According to the algorithm,  the distance between $p$ and $C^i$, $d(p,C^i)$, is greater than $\eta = 2opt$. So $p$ must be in an optimal cluster uncovered by the previous centers in~$C^i$. 
\item[(b)]  Otherwise (i.e., there exists $X\in \mathcal{X}$ with $p\in X$):
    
According to the algorithm, suppose $X$ (and hence $p\in X$) is in an optimal cluster covered by one of the centers in $C^i$, but $\max_{x\in X}d(x,C^i) > 2opt$ holds. This contradicts the fact that each diameter of optimal cluster $U_i$ is less than or equal to $2opt$.
\end{enumerate}
Thus, $p$ is in a different optimal cluster from the covered clusters by the center set $C^i$. Additionally, the points in the center set $C^{i+1}$ also belong to a different optimal cluster. When $i = k-1$, this completes the proof. 
\end{proof}

\subsection{Local Search  for CL $k$-Center}
\label{subsec:LS_cl_kcenter}
We give a local search method for individually processing the disjoint CL sets by using an auxiliary graph to represent the relationship between the CL sets and the set of centers, which essentially transforms the problem into finding a special structure therein. Again, we assume that $opt$ is known and use $\eta=2opt$ as the threshold.

\subsubsection{Dominating Matching Set}
First, we define a structure  to capture the relationship between the current center set $C$ and a CL set $Y$:
\begin{defn}
\label{defn:minmax-matching}
(Threshold-based maximum-cardinality matching) Given a center set $C$ and a CL set $Y$, for a threshold $\eta>0$, we define an auxiliary bipartite graph $G=(C,Y;E)$ in which, for $c\in C$ and $y\in Y$,  an edge $e(c,y)$ exists if and only if it is within a distance $d(c,y) \leq \eta$. Then $M(G,\eta)\subseteq E$ is a threshold-based maximum matching if and only if: (1) $M(G,\eta)$ is a matching; (2) $M(G,\eta)$ is with the largest cardinality, i.e. $|M(G,\eta)|$ is maximized.
\end{defn}

Then, for the relationship of the points constrained by CL sets, we use a more sophisticated auxiliary graph as follows:

\begin{defn}\label{def:kpartitiongraph-2}(The auxiliary $l$-partite graph) 
Let ${\mathcal{Y}}=\{Y_{1},\dots,Y_{l}\}$ be the collection of disjoint CL sets and $\eta$ be a given distance bound. 
We say $G_{{\mathcal{Y}}}$ is the auxiliary $l$-partite graph wrt $\eta$ if and only if:
\begin{itemize}
 \item  The vertex set  of  $G_{{\mathcal{Y}}}$ is $V(G_{{\mathcal{Y}}})=V_{{\mathcal{Y}}}=\bigcup_{Y\in{\mathcal{Y}}}Y$; 
 \item The edge set of  $G_{{\mathcal{Y}}}$ is  $E(G_{{\mathcal{Y}}})=E_{{\mathcal{Y}}}$. For a pair of points  $y\in Y_{i}$ and $y'\in Y_{j}$ for $i\neq j$,  $E_{{\mathcal{Y}}}$ contains edge $e\left(y,\,y'\right)$  if and only if $d\left(y,\,y'\right)\leq \eta$  holds. 
\end{itemize}
\end{defn}

Essentially, our algorithm aims to identify a set of points $\Gamma\subseteq V_{{\mathcal{Y}}}$,  such that all points within $V_{{\mathcal{Y}}}$ can be clustered regarding the CL constraints. 
Formally, such  $\Gamma$ can be  defined as below:
\begin{defn}
\label{LMI}(Dominating matching set, DMS) For an auxiliary $l$-partite graph $G_{{\mathcal{Y}}}$ regarding $\eta$,  the current center set $\Gamma\subseteq V_{{\mathcal{Y}}}$ is a \emph{dominating matching set}, if and only if for each $Y\in{\mathcal{Y}}$, there exists a threshold-based maximum matching $M$ between $\Gamma\setminus Y$ and $Y\setminus\Gamma$ with its cardinality being exactly $\vert M\vert =\vert Y\setminus \Gamma\vert$.
\end{defn}

We say a DMS $\Gamma$ is \textit{infeasible} wrt $\eta$, if and only if under the threshold $\eta$ there exists $Y$ that the threshold-based maximum-cardinality matching between $ Y\setminus \Gamma$ and  $\Gamma\setminus Y$ has size strictly smaller than $\vert Y\setminus \Gamma\vert$.
In particular, the \emph{minimum} DMS problem aims to find a \emph{dominating matching set} with the smallest cardinality. A natural idea of the algorithm is to find a \textit{minimum} DMS wrt $G_{{\mathcal{Y}}}$ to serve the points of $V_{\mathcal{Y}}$. However, a \textit{minimum} DMS is hard to compute, due to its hardness as stated below:

\begin{prop}\label{prop:ptas}
Even when $\vert Y_{i}\vert=1, \forall i$,  and the points are distributed on a $2$-dimensional plane, a \textit{minimum} DMS is $\mathcal{NP}$-hard to compute. Moreover, it admits no EPTAS (i.e., Efficient PTAS with runtime $f(\frac{1}{\epsilon})poly(n)$) under the exponential time hypothesis.
\end{prop}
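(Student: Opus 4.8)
Here is the plan for Proposition~\ref{prop:ptas}. I would not give a fresh reduction at all but instead show that the stated special case \emph{is}, verbatim, the problem \emph{minimum dominating set on unit disk graphs}. Assume every CL set is a singleton, $Y_i=\{y_i\}$, so $V_{\mathcal{Y}}=\{y_1,\dots,y_l\}$ and the auxiliary $l$-partite graph $G_{\mathcal{Y}}$ of Definition~\ref{def:kpartitiongraph-2} is the graph on these $l$ points with $y_iy_j\in E_{\mathcal{Y}}$ iff $d(y_i,y_j)\le\eta$; after rescaling all coordinates by $1/\eta$ this is exactly a unit disk graph supplied with an explicit planar embedding of its vertices. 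The first step is to observe that for singleton CL sets Definition~\ref{LMI} collapses to ordinary domination: for $\Gamma\subseteq V_{\mathcal{Y}}$ and $Y=\{y\}$, if $y\in\Gamma$ then $Y\setminus\Gamma=\emptyset$ and the empty matching trivially witnesses $|M|=|Y\setminus\Gamma|=0$, while if $y\notin\Gamma$ then $|Y\setminus\Gamma|=1$ and a threshold-based maximum matching of that cardinality between $\Gamma$ and $\{y\}$ exists iff $y$ has a neighbour in $\Gamma$ within $G_{\mathcal{Y}}$. Hence $\Gamma$ is a DMS iff it is a dominating set of $G_{\mathcal{Y}}$, and a \emph{minimum} DMS is a minimum dominating set of a unit disk graph given with its representation.

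Given this equivalence, the $\mathcal{NP}$-hardness part follows by quoting the classical result (Clark, Colbourn, and Johnson, 1990) that \emph{dominating set} is $\mathcal{NP}$-hard already on unit disk graphs: their hard instances are produced as an explicit placement of points in the plane, so the reduction literally outputs a family of singleton CL sets on a $2$-dimensional plane, which by the previous paragraph is an instance of \emph{minimum} DMS with the same optimum.

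For the non-existence of an EPTAS I would chain two standard facts. First, any minimization problem with an integer-valued objective that admits an EPTAS of running time $f(1/\epsilon)\cdot\mathrm{poly}(n)$ is fixed-parameter tractable parameterized by the optimum value: running the scheme with $\epsilon=\tfrac{1}{k+1}$ returns a solution of value at most $(1+\tfrac{1}{k+1})\cdot opt$, which is strictly below $k+1$ --- hence at most $k$ --- precisely when $opt\le k$, so in time $f(k+1)\cdot\mathrm{poly}(n)$ one decides whether $opt\le k$. Applied to \emph{minimum} DMS, an EPTAS would therefore yield an FPT algorithm for dominating set on unit disk graphs parameterized by solution size. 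Second, that parameterized problem is $\mathrm{W}[1]$-hard (Marx, 2006), and the exponential time hypothesis implies $\mathrm{FPT}\ne\mathrm{W}[1]$; the two together forbid such an FPT algorithm, and hence the EPTAS, under ETH. (This is the tight statement, since dominating set on unit disk graphs given with a representation does admit an ordinary, non-efficient PTAS.)

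The only part requiring genuine care is the collapse in the first paragraph --- verifying that for a singleton $Y$ the requirement $|M|=|Y\setminus\Gamma|$ in Definition~\ref{LMI} is equivalent to ``$y\in\Gamma$ or $y$ has a $\Gamma$-neighbour'' --- together with bookkeeping that the invoked lower bounds are stated for unit disk graphs handed over with their geometric embedding, which is exactly the ``points on a $2$-dimensional plane'' regime of the statement. If one insisted on a self-contained argument, the real work would instead be re-deriving these two lower bounds for unit disk graphs (an explicit grid-style reduction from a planar covering/domination problem for $\mathcal{NP}$-hardness, and from a grid-tiling / multicolored-clique type problem for the $\mathrm{W}[1]$-hardness), but none of that is specific to the DMS problem, so I would cite it rather than reprove it.
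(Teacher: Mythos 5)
Your proposal matches the paper's argument in essence: both identify the singleton-CL special case of minimum DMS with minimum dominating set on unit disk graphs given by a planar point set, and then invoke known hardness results (the paper cites Marx's 2007 optimality-of-geometric-schemes result for both the $\mathcal{NP}$-hardness and the ETH-based exclusion of an EPTAS, where you cite Clark--Colbourn--Johnson plus the W[1]-hardness-with-ETH route via the standard EPTAS$\Rightarrow$FPT argument). Your write-up is simply a more explicit version of the same reduction-by-observation, including the verification that Definition~\ref{LMI} collapses to ordinary domination for singleton CL sets, which the paper asserts with ``clearly.''
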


Clearly, when $\vert Y_{i}\vert=1, \forall i$, and the points are distributed on a plane, the \emph{minimum} DMS problem still includes the problem of finding a dominating set in a given disk graph, which is to find a minimum number of disks to dominate a set of given points distributed on the plane. In light of the negative result by \citet{marx2007optimality}, the above-mentioned problem, and the \textit{minimum} DMS problem, consequently is $\mathcal{NP}$-hard and admits no EPTAS under the exponential time hypothesis. 

Differently, we say a {DMS} $\Gamma$ is \emph{minimal} if the removal of any point therein makes $\Gamma$ no longer a {DMS}. Although it can be calculated in polynomial time, a \emph{minimal} DMS is yet not sufficient to approximate the problem.  So we further find a special \emph{minimal} DMS, called single-swap free {DMS} obtained by employing the local search technique, to accomplish the approximation task.

\begin{defn}
\label{SFDMS}(Single-swap free dominating matching set, SF-DMS) A DMS $\Gamma\subseteq V_{{\mathcal{Y}}}$ is single-swap free, if and only if for each $p\in V_{{\mathcal{Y}}}$ and $\{u,v\}\subseteq \Gamma$, the set $\Gamma\cup\{p\} \setminus {\{u,\,v\}}$ is not a DMS.
\end{defn}

In the above definition, we call the operation of adding a point $p$  while removing $u$ and $v$ an enhanced single swap, which is slightly different from the commonly used definition in the local search approach. Moreover, an {SF-DMS} is inherited by a \textit{minimal} DMS, since single-swap free also means no single point could be removed from $\Gamma$ while keeping $\Gamma$ a feasible cluster (e.g., when $p \in {\{u,\,v\}}$). In contrast, with the hardness of the minimum version, it is polynomial-time solvable to find an SF-DMS in a given auxiliary $l$-partite graph $G_{{\mathcal{Y}}}$. Moreover, we have the following lemma, which states that an {SF-DMS} can be employed to approximate our problem.

\begin{lem}\label{lem:DMSforoptimal} 
Let ${\mathcal{U}}$ be the clusters of an optimal solution wrt $opt$. Assume that  ${\mathcal{Y}}$ is a family of CL constraint sets and $\Gamma$ is an SF-DMS of the auxiliary $l$-partite graph $G_{{\mathcal{Y}}}$ wrt distance $2opt$. Then, we have $\vert\Gamma\vert\leq k_{\mathcal{Y}}$, where $k_{\mathcal{Y}}$ is the number of clusters of  ${\mathcal{U}}$  containing points of $V_{{\mathcal{Y}}}=\bigcup_{Y\in{\mathcal{Y}}}Y$.
\end{lem}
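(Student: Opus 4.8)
The plan is to prove the bound by contradiction: assuming $\vert\Gamma\vert\ge k_{\mathcal{Y}}+1$, I would produce an enhanced single swap that turns $\Gamma$ into another DMS, contradicting the assumption that $\Gamma$ is single-swap free in the sense of Definition~\ref{SFDMS}.

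The first step fixes the combinatorial configuration that the rest of the argument exploits. Because $\Gamma\subseteq V_{\mathcal{Y}}$ and every point of $V_{\mathcal{Y}}$ lies in one of the $k_{\mathcal{Y}}$ optimal clusters of $\mathcal{U}$ that meet $V_{\mathcal{Y}}$, the pigeonhole principle yields two distinct centers $u,v\in\Gamma$ belonging to the same optimal cluster $U^{*}$. Three facts follow: (i) $d(u,v)\le\operatorname{diam}(U^{*})\le 2opt$, since $U^{*}$ is a cluster of a radius-$opt$ solution; (ii) $u$ and $v$ lie in \emph{different} CL sets $Y_{u}\ne Y_{v}$, for otherwise a single CL set would contain two points sharing an optimal center, contradicting feasibility of the optimum for the CL constraints; and (iii) by the same feasibility, every optimal cluster meets every CL set in at most one point. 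It is also worth recording, as motivation for the swap below, that choosing one representative point from each of the $k_{\mathcal{Y}}$ relevant optimal clusters already gives a DMS of size $k_{\mathcal{Y}}$: each $p\in V_{\mathcal{Y}}$ can be matched to the representative of its own optimal cluster, at distance $\le 2opt$, and distinct points of any CL set have distinct optimal clusters by (iii).

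The second step is the swap itself. I would take $p$ to be a representative $r^{*}\in U^{*}\cap V_{\mathcal{Y}}$ of $U^{*}$ (when $U^{*}\cap V_{\mathcal{Y}}=\{u,v\}$ this degenerates to the deletion $\Gamma\setminus\{v\}$, which is the $p\in\{u,v\}$ case already noted after Definition~\ref{SFDMS}) and show that $\Gamma'=\Gamma\cup\{r^{*}\}\setminus\{u,v\}$ is again a DMS. Since the CL sets are disjoint, the matching condition of Definition~\ref{LMI} can be verified one CL set at a time. Starting from the threshold matchings guaranteed for $\Gamma$, removing $u$ and $v$ strands only a bounded set of points in each CL set $Y$: at most one point matched to $u$ and one matched to $v$, plus the points $u,v$ themselves in their own CL sets $Y_{u},Y_{v}$ (now uncovered). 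Each stranded point $y$ I would re-route \emph{through its own optimal cluster} $U_{\chi(y)}$: if $\Gamma'$ still has a center in $U_{\chi(y)}$, that center is within $2opt$ of $y$ by the diameter bound and, by (iii), lies outside $y$'s CL set, so the reassignment is legal; in particular $v$ is re-routed to $u\in U^{*}\cap\Gamma'$ and, in the non-degenerate case, $u$ to $r^{*}$. Conflicts in which several stranded points contend for the same center are resolved by following alternating/augmenting paths in the bipartite graph of Definition~\ref{defn:minmax-matching}; the claim is that whenever the needed augmenting path is missing, the resulting Hall-deficient subset $S$ of a CL set---whose members occupy pairwise distinct optimal clusters by (iii)---forces a matching deficiency already for $\Gamma$, contradicting that $\Gamma$ is a DMS.

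The main obstacle is exactly this matching-repair step. The tempting shortcut of re-routing a point from $v$ to the nearby center $u$ only gives $d(\cdot,u)\le 2opt+2opt=4opt$ by the triangle inequality, which is too weak; the essential idea is that displaced points must be re-routed inside their own optimal clusters, which have diameter at most $2opt$, and that these re-routings can be performed simultaneously across a CL set. Correctly treating the two exceptional CL sets $Y_{u}$ and $Y_{v}$, where a former center becomes a point that must now be covered, and verifying that the representative-swap variant always succeeds where plain deletion might fail, are the delicate points; once the repair goes through for every CL set, $\Gamma'$ is a DMS, contradicting single-swap freeness, and therefore $\vert\Gamma\vert\le k_{\mathcal{Y}}$.
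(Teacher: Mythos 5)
There is a genuine gap, and it sits exactly where you flag it: the matching-repair step. Your swap always adds a representative $r^{*}$ of the \emph{doubly-hit} cluster $U^{*}$ and removes the two co-clustered centers $u,v$, and the justification you offer for the repair---that a Hall-deficient set $S\subseteq Y$ for $\Gamma'=\Gamma\cup\{r^{*}\}\setminus\{u,v\}$ would already be deficient for $\Gamma$---does not follow: removing two centers shrinks neighborhoods by at most two, so deficiency for $\Gamma'$ only yields $\vert N(S)\cap(\Gamma\setminus Y)\vert<\vert S\vert+2$, which is no contradiction with $\Gamma$ being a DMS. The re-routing ``through the point's own optimal cluster'' fails precisely when that cluster contains no center of $\Gamma'$ at all, and nothing in your pigeonhole setup rules this out: $\vert\Gamma\vert\geq k_{\mathcal{Y}}+1$ allows the excess centers to be concentrated (two pairs, or a triple, in few clusters) while some relevant cluster $U_{e}$ is missed entirely. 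Concretely, if $U_{e}\cap\Gamma=\emptyset$ and $U_{e}$ contains points $y_{1},y_{2}$ (necessarily in different CL sets) whose only $\Gamma$-neighbors within $2opt$ outside their own CL sets are $u$ and $v$ respectively, then deleting $u$ alone strands $y_{1}$, deleting $v$ alone strands $y_{2}$, and $\Gamma\cup\{r^{*}\}\setminus\{u,v\}$ strands both, since $r^{*}\in U^{*}$ need only be within $4opt$ of $y_{1},y_{2}$ by the triangle inequality; such distances are metrically realizable, and $\Gamma$ can still be a DMS (it covers $y_{1},y_{2}$ via $u,v$). Note also that the easy case you do handle correctly---one co-clustered pair and all other centers in distinct clusters---is exactly the case where no cluster is missed; whenever your argument would need the repair to be nontrivial, a missed cluster exists and the claim breaks. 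So the swap that must be exhibited is not determined by the co-clustered pair alone, and the proposal does not show how to find a working one.

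This is where the paper's proof diverges from yours: it proceeds by induction on the number of CL sets and splits on whether $\Gamma$ meets every optimal cluster containing a point of the chosen CL set $Y$. When it does (its Case (a)), no swap is constructed at all; a counting argument on $\Gamma_{Y}$ and the residual family $\mathcal{Y}_{res}$, combined with the induction hypothesis, already contradicts $\vert\Gamma\vert\geq k_{\mathcal{Y}}+1$. When some cluster $U_{e}$ is missed (its Case (b)), the enhanced single swap adds a point $p\in U_{e}$---the uncovered cluster, not the doubly-covered one---and the two removable centers are located by the same counting/induction argument, which is precisely the information your direct pigeonhole-plus-local-repair scheme lacks. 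To salvage your route you would need, at minimum, to target a missed cluster when one exists and to prove the existence of a removable pair there, which effectively reconstructs the paper's induction; as written, the central claim is unproven and false in general.
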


\begin{proof}
We show the lemma by induction. When $\vert{\mathcal{Y}}\vert=1$, the theorem is apparently true because on the one hand, the only CL set $Y\in{\mathcal{Y}}$ has its points appearing in exactly $\vert Y\vert$ clusters of ${\mathcal{U}}$ and hence $k_{\mathcal{Y}}=|Y|$; while on the other hand, $\Gamma$ exactly contains all points of $Y$ and hence $\vert\Gamma\vert=\vert Y\vert=k_{\mathcal{Y}}$.

Next, we need to show the case for $\vert{\mathcal{Y}}\vert=l$. For the task, we consider $Y$ to reduce the size of ${\mathcal{Y}}$ for induction. Assume ${\mathcal{U}}_{Y} \subseteq {\mathcal{U}}$ is the set of optimal clusters, where each cluster $U\in {\mathcal{U}}_{Y}$ contains a distinct element of $Y$. So $|{\mathcal{U}}_{Y}|=|Y|$ holds. Suppose $|\Gamma|\geq k_\mathcal{Y}+1$, then we will show a contradiction for the following two cases, considering whether $\Gamma$ contains at least one distinct element for each  $U\in \mathcal{U}$ or not:

\begin{enumerate}
\item[(a)] $\Gamma\cap U\neq\emptyset$, $\forall  U\in{\mathcal{U}}_{Y}$: 

We will show that $\Gamma$ contains at least one center that can be removed. 
Denote by $\Gamma_{Y}\subseteq\Gamma$  the set of centers in which exactly one center appears in each $U\in {\mathcal{U}}_{Y}$  and $\Lambda=\bigcup_{U\in {\mathcal{U}}_{Y}}U$. Clearly, $|\Gamma_Y|=\vert{\mathcal{U}}_{Y}\vert=|Y|$. Let ${\mathcal{Y}_{res}}=\{Y'\setminus\Lambda\mid Y'\in{\mathcal{Y}\setminus}\{Y\}\}$. Then, similar to the notation of $V_{\mathcal{Y}}$, we define $V_{\mathcal{Y}_{res}}=\bigcup_{Y\in\mathcal{Y}_{res}}Y$, where the points of $V_{{\mathcal{Y}}}\setminus\Lambda$ are constrained by ${\mathcal{Y}_{res}}$.   That is, we actually divide the points of $V_{\mathcal{Y}}$ into two parts $\Lambda$ and $V_{\mathcal{Y}_{res}}$, for which the two DMS are respectively $\Gamma_{Y}$ and $\Gamma\setminus\Gamma_{Y}$. By definition of  $\Gamma_{Y}$,   $\Gamma\setminus\Gamma_{Y}$ is with a size $|\Gamma|-|\Gamma_Y|=|\Gamma|-|Y|$.
By assumption,   $|\Gamma|\geq k_{\mathcal{Y}}+1$ holds, so we have 

\begin{equation}
|\Gamma\setminus\Gamma_{Y}|\geq k_{\mathcal{Y}}+1-|Y|.\label{eq:GammaY}
\end{equation} 

On the other hand, $\Gamma\setminus\Gamma_{Y}$ is an SF-DMS, since any enhanced single swap thereon is also an SF-DMS for $\Gamma$. By the \emph{induction hypothesis},

\begin{equation}
\vert\Gamma\setminus\Gamma_{Y}\vert\leq k_{\mathcal{Y}_{res}}, \label{eq:induction}
\end{equation}

where $k_{\mathcal{Y}_{res}}$ is the number of clusters of ${\mathcal{U}}$  containing points of $V_{{\mathcal{Y}_{res}}}$.
Because the centers of $\Gamma_Y$ completely cover the subset of clusters of $V_{\mathcal{Y}}$ where they appear,  which are exactly $|\Gamma_Y|$ clusters. So, there are at most $k_{\mathcal{Y}}-|\Gamma_Y|$ clusters of $V_{\mathcal{Y}}$ that contain points of $V_{\mathcal{Y}_{res}}$. So we get $k_{\mathcal{Y}_{res}}\leq k_{\mathcal{Y}}-|\Gamma_Y|$  by the definition of ${\mathcal{Y}_{res}}$. 
Combining  with the Inequality (\ref{eq:induction}), we immediately get: 

\begin{equation}
\vert\Gamma\setminus\Gamma_{Y}\vert\leq k_{\mathcal{Y}}-|\Gamma_Y|= k_{\mathcal{Y}}-|Y|,\label{eq:boundy'}
\end{equation}

which contradicts Inequality (\ref{eq:GammaY}).\\

\item[(b)] Otherwise: 

Assume that $\Gamma'$ is a subset of $\Gamma$ that has exactly one center appearing in each $U\in \mathcal{U}$.
Let $\mathcal{U}'$ be the set of optimal clusters, each of which respectively contains a distinct element of $\Gamma'$, i.e., $|\Gamma'| = |U'|$. Similarly, we define $\Lambda=\bigcup_{U\in \mathcal{U}'} U$. 
Different from Case (a),  by assumption, there must exist at least one set, say $U_e\in \mathcal{U}\setminus\mathcal{U}'$ with $\Gamma\cap U_e=\emptyset$. We will show that for any point $p\in U_e$, there exists $u,v\in \Gamma\setminus \Gamma'$, such that $\Gamma\cup\{p\}\setminus \{u,v\}$ remains a feasible center set regarding the distance bound $2opt$ and the CL constraints. This is an enhanced single swap for $\Gamma$ by definition and contradicts the assumption that $\Gamma$ is an SF-DMS.

It remains to show the existence of such $\{u,v\}$ in $\Gamma\setminus \Gamma'$. We define $\mathcal{Y}_{res}=\{Y'\setminus(\Lambda\cup U_e)\mid Y'\in \mathcal{Y}\}$. Clearly, $\Gamma\setminus\Gamma'$ is an SF-DMS of $\mathcal{Y}_{res}$, since any enhanced single swap that is feasible for $\mathcal{Y}_{res}$, is also feasible for $\Lambda$. Then, following the same line with the proof of Case (a)  and observing that the points of $\mathcal{Y}_{res}$ appear only in clusters of $\mathcal{U}\setminus\mathcal{U}'-U_e$ up to $k_{\mathcal{Y}}-|\Gamma'|-1$.
Therefore, adding up the center $p$ for $U_e$, we get that $\Gamma\setminus\Gamma'$ have at most $k_{\mathcal{Y}}-|\Gamma'|$ centers:

\begin{equation}|\Gamma\setminus \Gamma' | \leq  k_{\mathcal{Y}} -\vert\Gamma'\vert.\label{eq:induction-2}
\end{equation}

On the other hand, by assumption, we have

\[|\Gamma|\geq k_{\mathcal{Y}}+1.\] 

Then, by the definition of $\Gamma',$ we have

\[|\Gamma\setminus \Gamma' |=|\Gamma|-\vert\Gamma'\vert\geq  k_{\mathcal{Y}}+1 -\vert\Gamma'\vert.\]

So, combining Inequality (\ref{eq:induction-2}), we immediately obtain a contradiction and consequently complete the proof.
\end{enumerate}
\end{proof}

\begin{algorithm}[t]
        \SetAlgorithmName{Algorithm}{Algorithm}{List of Algorithms}
        \SetAlCapFnt{\normalfont}
         \small
        \caption{{Local search for CL $k$-center.}}
        \label{alg:localsearch}
        \KwIn{A family of CL constraint sets ${\mathcal{Y}}$, a given distance bound $\eta$, and a set of center candidates  $\Gamma$ that already covers $V_{\mathcal{Y}}$ with radius bounded by $\eta$.}
        \KwOut{An SF-DMS $\Gamma$ wrt $\eta$.}
      
       \While{true}{
            Find  $p \in V_{\mathcal{Y}}$ and  a pair of points  $u,\,v\in \Gamma$, such that $\Gamma\cup \{p\}\setminus \{u,v\}$ remains a center set  that  covers  $V_{\mathcal{Y}}$ wrt  $\eta$ and the CL constraints\;
            \uIf{there exist pair such $\{p\}$ and $\{u,v\}$ as above}{
                Set $\Gamma\leftarrow \Gamma\cup\{p\}\setminus\{u,v\}$\;
            }\Else{Return $\Gamma$.}
        }
\end{algorithm}

\begin{algorithm}[t]
     \SetAlgorithmName{Algorithm}{Algorithm}{List of Algorithms}
        \SetAlCapFnt{\normalfont}
    \small
     \caption{{Grand algorithm for constrained $k$-center.}}
     \label{alg:k-center}
      \KwIn{Database $P$ of size $n$ with ML sets ${\mathcal{X}}$ and CL sets ${\mathcal{Y}}$, a positive integer $k$, and a distance bound $\eta = 2opt$.}
      \KwOut{A center set $C$ with radius bounded by $2opt$.}
      Set $C_{1}\leftarrow \emptyset$ and $C_{2}\leftarrow \emptyset$\;
      \tcp{\small STAGE 1(a): Centers for ML}
      Find a set of points $C_{1}$ as centers by Alg.~\ref{alg:ML} respecting $P$,  ${\mathcal{X}}$ and $2opt$ but ignoring ${\mathcal{Y}}$\;
      \tcp{\small STAGE 1(b): Find a set of center candidates for CL sets {such that $ C_{1}\cup C_{2}$ covers all points of $\mathcal{Y}$}}
      Set $C \gets C_1$\;
      \For{each bipartite graph $G(Y_i,C; E)$ regarding $Y_i\in \mathcal{Y}$ and  $\eta=2opt$}{
      \For{each $p_{i}\in{Y_i} \cap X_j, X_j\in \mathcal{X}$}{
            Set the weight of $d(p_i,C)$  in $G(Y_i,C; E)$ using the value of $\min_{c\in C}\max_{x\in X_j}d(x,c)$\;
        }
         Find a threshold-based maximum-cardinality matching $M\in G(Y_i,C; E)$ respecting $\eta=2opt$ as defined as in Definition~\ref{defn:minmax-matching}\; 
         \If{$|M\cap Y_i|<|Y_i|$}{
           Set $C\leftarrow  (Y_i\setminus {M\cap Y_i}) \cup C$\;
        }
      } 
          
       \tcp{\small \textbf{Stage 2:} Centers for CL sets by shrinking $C_{2}$ regarding $C_{1}$ and $C_{2}$ through tuning Alg.~\ref{alg:localsearch} of finding an SF-DMS as above.}
        Set $C_2 \gets C\setminus C_1$\;
       \While{true}{
        Find  $p \in V_{\mathcal{Y}}$ and a pair of points  $\{u,\,v\}\in C_2$, such that $ C_1\cup C_2\cup \{p\}\setminus \{u,v\}$ remains a set of centers that feasibly cover  $V_{\mathcal{Y}}$ wrt  $2opt$ and satisfy the CL constraints\;
        
       \eIf{there exist no such $\{p\}$ and $\{u,v\}$ as above}{ 
        Return $C_{1}\cup C_{2}$ and terminate.
       }{
       Set $C_2\leftarrow C_2\cup\{p\}\setminus\{u,v\}$\;
       }
       }
\end{algorithm}

\subsubsection{The Local Search Algorithm}
According to Lem.~\ref{lem:DMSforoptimal}, we can calculate an SF-DMS with a size bounded by $k$ to cluster the points of $V_{\mathcal{Y}}$. Note that we actually have $P=V_{\mathcal{Y}} $ if considering each unconstrained point of $P$ as a new single CL set $Y$ with $|Y|=1$. Hence, we consider only the points constrained by CL sets through the following process:
(1) Initially, find a feasible solution $\Gamma$ to CL $k$-center with a size possibly larger than $k$; (2) Execute the enhanced single swap as follows if there exists any: find a point $p\in V_{\mathcal{Y}}\setminus \Gamma $ and a pair of points $u,v\in \Gamma$, such that $\Gamma\cup{\{p\}}\setminus \{u,\,v\}$ remains a valid cluster respecting the radius $2opt$.    
Formally, the layout of our algorithm is as in Alg.~\ref{alg:localsearch}. From Lem.~\ref{lem:DMSforoptimal}, we can immediately obtain the approximation ratio of Alg.~\ref{alg:localsearch} as in the following corollary.

\begin{corollary}
    Alg.~\ref{alg:localsearch} produces $\Gamma$ that is a set of centers with size $|\Gamma|\leq k$ and can cluster the points of $P$ within radius $2opt$ and with all the CL constraints satisfied.
\end{corollary}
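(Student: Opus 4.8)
The plan is to derive the corollary directly from Lemma~\ref{lem:DMSforoptimal} together with the termination condition of Alg.~\ref{alg:localsearch}. First I would observe that the input to Alg.~\ref{alg:localsearch} is, by hypothesis, already a feasible center set $\Gamma$ that covers $V_{\mathcal{Y}}$ within radius $\eta = 2opt$ and respects all CL constraints; hence $\Gamma$ is a DMS of the auxiliary $l$-partite graph $G_{\mathcal{Y}}$ wrt $2opt$ in the sense of Definition~\ref{LMI}. Next I would argue that every iteration of the while-loop preserves this invariant: the enhanced single swap $\Gamma \leftarrow \Gamma \cup \{p\} \setminus \{u,v\}$ is performed only when the resulting set still covers $V_{\mathcal{Y}}$ wrt $\eta$ and satisfies the CL constraints, so feasibility (i.e., being a DMS) is maintained throughout, and moreover $|\Gamma|$ is non-increasing since each swap removes two points and adds one. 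Because $|\Gamma|$ strictly decreases with each successful swap and is bounded below, the loop terminates after finitely many iterations.

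The key step is then to identify the output of Alg.~\ref{alg:localsearch} as an SF-DMS. When the algorithm returns $\Gamma$ in the \textbf{else} branch, it is precisely because no point $p \in V_{\mathcal{Y}}$ and pair $\{u,v\} \subseteq \Gamma$ exist with $\Gamma \cup \{p\} \setminus \{u,v\}$ a feasible cover of $V_{\mathcal{Y}}$; this is exactly the definition of single-swap freeness in Definition~\ref{SFDMS}. Hence the returned $\Gamma$ is an SF-DMS of $G_{\mathcal{Y}}$ wrt $2opt$. Applying Lemma~\ref{lem:DMSforoptimal} with this $\Gamma$ gives $|\Gamma| \le k_{\mathcal{Y}}$, where $k_{\mathcal{Y}}$ is the number of optimal clusters containing points of $V_{\mathcal{Y}}$; since $k_{\mathcal{Y}} \le k$, we conclude $|\Gamma| \le k$.

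Finally I would handle the claim about clustering the points of $P$ (not just $V_{\mathcal{Y}}$) within radius $2opt$ by invoking the reduction noted just before the corollary: each unconstrained point of $P$ can be regarded as a singleton CL set $Y$ with $|Y|=1$, so that $P = V_{\mathcal{Y}}$ without loss of generality, and feasibility of $\Gamma$ as a cover of $V_{\mathcal{Y}}$ then means every point of $P$ is within distance $2opt$ of some center in $\Gamma$. Combined with the preserved CL-constraint satisfaction, this yields all three asserted properties. The only mild subtlety — and the step I would be most careful about — is confirming that the feasibility invariant genuinely matches the DMS definition throughout, i.e., that "covers $V_{\mathcal{Y}}$ wrt $\eta$ and the CL constraints" is equivalent to the existence, for each $Y \in \mathcal{Y}$, of a threshold-based maximum matching between $\Gamma \setminus Y$ and $Y \setminus \Gamma$ of cardinality $|Y \setminus \Gamma|$; this equivalence is where the auxiliary-graph machinery of Definitions~\ref{defn:minmax-matching}--\ref{LMI} does its work, so I would state it explicitly as the bridge between the algorithmic description and Lemma~\ref{lem:DMSforoptimal}, but it requires no new calculation beyond unpacking the definitions.
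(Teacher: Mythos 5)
Your proposal is correct and takes essentially the same route as the paper, which obtains the corollary immediately from Lemma~\ref{lem:DMSforoptimal} by observing that the terminated local search returns an SF-DMS and that treating each unconstrained point as a singleton CL set makes $P=V_{\mathcal{Y}}$. Your version merely spells out the feasibility invariant, the strict decrease of $|\Gamma|$ ensuring termination, and the bound $k_{\mathcal{Y}}\leq k$, all of which the paper leaves implicit.
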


It is worth noting that Lem.~\ref{lem:DMSforoptimal} holds for SF-DMS, but is not true for $minimal$ DMS as demonstrated in the counter-example depicted in the full version. 

\subsection{Approximating  CL/ML k-Center}

Now that we have introduced all necessary components, we present our algorithm for the $k$-center problem with both CL and ML constraints. The key idea is to treat each ML-constrained point set as a ``big'' point, redefining its distance computation, and then combine the two subroutines (Algs.~\ref{alg:ML} and~\ref{alg:localsearch}) so that the total number of clusters remains bounded by $k$. Formally, the complete procedure appears in Alg.~\ref{alg:k-center}.

For the quality of solution produced by Alg.~\ref{alg:k-center}, we have:
\begin{thm}
\label{thm:correctness}Let $opt$ be the radius of an optimal solution. If $\eta \geq 2opt$, then Alg.~\ref{alg:k-center} produces $C$ with $|C|\leq k$. 
\end{thm}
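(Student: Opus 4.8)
The plan is to split the final center set as $C=C_1\cup C_2$, where $C_1$ is the output of Stage~1(a) (Alg.~\ref{alg:ML} run ignoring $\mathcal{Y}$) and $C_2=C\setminus C_1$ is what remains after the Stage~2 local search, bound each piece separately, and add. First I would record that both lemmas I intend to invoke, Lem.~\ref{lem:ML-num} and Lem.~\ref{lem:DMSforoptimal}, are stated for the threshold $2opt$ but their proofs use only that every optimal cluster has diameter at most $2opt$; since $2opt\le\eta$, they hold verbatim with $\eta$ as the covering/matching threshold. With that, Lem.~\ref{lem:ML-num} applies to $C_1$ (whose construction starts from $C\leftarrow\emptyset$ since $\mathcal{Y}$ is ignored), so the centers of $C_1$ lie in pairwise distinct optimal clusters; call these $|C_1|$ clusters $\mathcal{U}_1$ and set $\Lambda_1=\bigcup\mathcal{U}_1$. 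By the triangle inequality inside a cluster, $C_1$ covers every point of $\Lambda_1$ within $2opt\le\eta$, and by the termination rule of Alg.~\ref{alg:ML} it covers every unconstrained point and every ML set within $\eta$. In particular $|C_1|\le k$, so it remains to show $|C_2|\le k-|C_1|$.

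For the bound on $C_2$ I would pass to a residual CL instance. Stage~1(b) makes $C_1\cup C_2$ a feasible DMS of the auxiliary $l$-partite graph $G_{\mathcal{Y}}$ wrt $\eta$ (unmatched points of each threshold-maximum matching are added as centers; the weight redefinition for ML-constrained CL points enforces the ``big point'' distance so a matched center serves the whole ML set), and Stage~2 maintains feasibility. Fix for each $Y\in\mathcal{Y}$ a threshold-$\eta$ maximum matching $M_Y$ between $(C_1\cup C_2)\setminus Y$ and $Y\setminus(C_1\cup C_2)$ saturating the latter, chosen — via augmenting-path exchanges, using that distinct points of a CL set lie in distinct clusters so that $y\mapsto(\text{the }C_1\text{-center of }y\text{'s cluster})$ is a partial matching into $C_1$ — so as to send as many $Y$-points into $C_1$ as possible. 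Put $A_Y=(Y\cap C_1)\cup\{y:M_Y(y)\in C_1\}$, $Y_{res}=Y\setminus A_Y$, and $\mathcal{Y}_{res}=\{Y_{res}\}_{Y\in\mathcal{Y}}$. Then I would verify: (i) $C_2\subseteq V_{\mathcal{Y}_{res}}$ and $C_2$ is a DMS of $G_{\mathcal{Y}_{res}}$ wrt $\eta$, because each $Y_{res}$-point is either in $C_2$ or matched by $M_Y$ to a center of $C_2\setminus Y_{res}$; (ii) $C_2$ is single-swap free for $G_{\mathcal{Y}_{res}}$ in the sense of Def.~\ref{SFDMS}, since any enhanced single swap within $C_2$ that keeps $C_2$ a DMS of $G_{\mathcal{Y}_{res}}$ lifts — using that $C_1$ covers $V_{\mathcal{Y}}\setminus V_{\mathcal{Y}_{res}}$ within $\eta$ — to a swap keeping $C_1\cup C_2$ a feasible DMS of $G_{\mathcal{Y}}$, contradicting termination of Stage~2; and (iii) the number $k_{\mathcal{Y}_{res}}$ of optimal clusters meeting $V_{\mathcal{Y}_{res}}$ is at most $k-|C_1|$. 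Given (i)–(iii), Lem.~\ref{lem:DMSforoptimal} applied to $G_{\mathcal{Y}_{res}}$ gives $|C_2|\le k_{\mathcal{Y}_{res}}\le k-|C_1|$, hence $|C|=|C_1|+|C_2|\le k$.

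The main obstacle is item (iii): charging the $|C_1|$ ``used up'' optimal clusters against $|C_1|$ clusters that are emptied of CL points in the residual. This is delicate because the two families need not coincide — in a maximum matching a $C_1$-center may be spent on a CL point outside $\mathcal{U}_1$ rather than on the one in its own cluster — so one cannot simply ``delete $\mathcal{U}_1$''. The route I would take is a Hall/deficiency argument at the level of the bipartite matchings: since $C_1\cup C_2$ is a feasible DMS, $Y\setminus(C_1\cup C_2)$ satisfies Hall's condition towards $(C_1\cup C_2)\setminus Y$ for every $Y$, and I would use this to exhibit a system of maximum matchings $\{M_Y\}$ under which the surviving CL points jointly avoid a common family of at least $|C_1|$ clusters — pairing each $c\in C_1$ with a cluster that every relevant $M_Y$ routes into $C_1$, exploiting that a CL set meets each cluster at most once and that the $M_Y$ are chosen independently per $Y$. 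Once this combinatorial core is in place, (i) and (ii) are routine checks, finiteness of Stage~2 is immediate (each swap strictly decreases $|C_2|$), and the passage from $\eta=2opt$ to arbitrary $\eta\ge 2opt$ is the trivial observation of the first paragraph.
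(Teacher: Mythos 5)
Your overall architecture is the paper's: split $C$ into $C_1$ (Stage 1(a)) and $C_2=C\setminus C_1$, bound $|C_1|$ by Lem.~\ref{lem:ML-num}, and bound $|C_2|$ by exhibiting it as an SF-DMS of a residual CL instance and invoking Lem.~\ref{lem:DMSforoptimal}. The $C_1$ part and the observation that the lemmas survive replacing $2opt$ by any $\eta\ge 2opt$ are fine. The genuine gap is exactly the step you flag as the ``main obstacle'', item (iii): with your residual instance, defined through a system of threshold matchings $\{M_Y\}$ of the algorithm's output ($A_Y=(Y\cap C_1)\cup\{y:M_Y(y)\in C_1\}$, $Y_{res}=Y\setminus A_Y$), the inequality $k_{\mathcal{Y}_{res}}\le k-|C_1|$ is not established, and it is not clear it can be: a CL point lying in an optimal cluster that does contain a $C_1$ center may nevertheless be matched to a $C_2$ center (e.g.\ because that $C_1$ center is consumed by another point of the same CL set, or by the matchings of other CL sets chosen independently), so its cluster re-enters the residual count. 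Your proposed fix --- choosing each $M_Y$ to maximize traffic into $C_1$ and then a Hall/deficiency argument producing $|C_1|$ clusters jointly avoided by all residuals --- is only a sketch of ``the route I would take''; the required uniform-avoidance claim across all $Y$ simultaneously is precisely the hard combinatorial content, and nothing in the proposal proves it. As written, the chain $|C_2|\le k_{\mathcal{Y}_{res}}\le k-|C_1|$ therefore does not go through.

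The paper avoids this difficulty by defining the residual purely in the analysis, from the optimum rather than from the algorithm: since the $k_u=|C_1|$ centers of $C_1$ lie in distinct optimal clusters (Lem.~\ref{lem:ML-num}), it sets $V_{res}=\bigcup_{i=k_u+1}^{k}U_i$, the union of the $k-k_u$ optimal clusters containing no $C_1$ center, and $\mathcal{Y}'=\{Y\cap V_{res}\mid Y\in\mathcal{Y}\}$. Then the cluster count is immediate by construction ($V_{\mathcal{Y}'}\subseteq V_{res}$ meets at most $k-k_u$ clusters), and the only thing to check is the easy direction you also use in your item (ii): any enhanced single swap that is feasible for $C_2$ relative to $V_{\mathcal{Y}'}$ is feasible for $C_1\cup C_2$ relative to $V_{\mathcal{Y}}$, so termination of Stage~2 forces $C_2$ to be single-swap free for the residual, and Lem.~\ref{lem:DMSforoptimal} gives $|C_2|\le k-k_u$. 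In short, you should replace your matching-based residual by the optimum-cluster-based one (or else actually prove the joint Hall-type avoidance claim); until one of these is done, Thm.~\ref{thm:correctness} is not proved by your argument.
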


\begin{proof}
Let $ \mathcal{U}=\left\{ U_{1},\dots,U_{k}\right\} $ be the clusters of an optimal solution wrt $opt$.  We will divide $C$ into two parts according to $\mathcal{U}$, say $C_1$ and $C_2$, and then bound their size individually.  Assume that we have added $k_{u}$ centers in  Stage 1(a) as $C_1$. Following Lem. \ref{lem:ML-num}, clearly $k_{u}\leq k$ holds.

It remains only to bound the size of  $C_2=C\setminus C_1$ with $k-k_u$. Because all the points outside the CL sets can be clustered to the $k_u$ centers of $C_1$, $C_2$ needs only to serve a subset of points in the CL sets. So we will identify $C_2$ as an SF-DMS regarding a subset of CL-constrained points that are not captured by the first $k_u$ centers.  
  
Since each center  of  $C_1$ is in a different cluster of  $\mathcal{U}$ as in Lem.~\ref{lem:ML-num}, we can w.l.o.g. assume that  the  $k_{u}$ centers of $C_1$ appear in  $\{U_{1},\dots,U_{k_u}\}$, respectively. Then, each set in $\{U_{k_u+1},\dots,U_{k}\}$ do not contain any center  of $C_1$. Let $V_{res}= \bigcup_{i=k_u+1}^{k}U_i$ and  $\mathcal{Y}'=\{Y\cap V_{res} \mid Y\in \mathcal{Y} \}$. 
 Clearly, after the local search phase for the CL center candidates in $C_2$ as in Stage $2$, the points of $V_{\mathcal{Y}'}$ remained in $C_2$   clearly compose an  SF-DMS for $V_{\mathcal{Y}'}$ (or for a subset of $V_{\mathcal{Y}'}$). The reason is that a feasible enhanced single swap exists for $C_2$ and $V_{\mathcal{Y}'}$ is also a feasible enhanced single swap for   $C_2$ and   $V_{\mathcal{Y}}$.  So when  Alg.~\ref{alg:k-center} terminates when there exists no  feasible enhanced single swap for   $C_2$ and   $V_{\mathcal{Y}}$ (and therefore neither for $C_2$ and   $V_{\mathcal{Y}'}$).  
 Moreover, because  $V_{\mathcal{Y}'}$ is the set of CL-constrained points appearing only in the sets of $\{U_{k_u+1},\dots,U_{k}\}$, points of $V_{\mathcal{Y}'}$ appear in at most $k - k_u$ clusters of the optimal  ${\mathcal{U}}$. 
 Following Lem.~\ref{lem:DMSforoptimal}, the SF-DMS $C_2$ contains at most $k - k_u$ points (centers). Therefore, the total number of centers remaining in $C$ is at most $k$.
\end{proof}

Following Thm.~\ref{thm:correctness}, we immediately have the approximation ratio 2  of Alg.~\ref{alg:k-center}. Next, we analyze the runtime of Alg.~\ref{alg:k-center}.
\begin{lem}
\label{lem:alg5runtime}
Alg.~\ref{alg:k-center} runs  in time  $O(n^2k^{4.5})$.
\end{lem}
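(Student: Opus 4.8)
The plan is to bound the cost of Alg.~\ref{alg:k-center} phase by phase --- Stage~1(a), a single call to Alg.~\ref{alg:ML}; Stage~1(b), which for each CL set $Y_i$ reweights its ML-intersecting entries, builds the bipartite graph $G(Y_i,C)$ of Def.~\ref{defn:minmax-matching}, extracts a threshold-based maximum-cardinality matching, and appends the unmatched part of $Y_i$ to $C$; and Stage~2, the local-search \texttt{while}-loop --- and then to argue that Stage~2 dominates. I would first fix the parameters that recur in these costs: all $\binom{n}{2}$ pairwise distances are precomputed once in $O(n^2)$ time, so each later distance query is $O(1)$; by definition $|Y_i|\le k$, $\sum_i|Y_i|\le n$ (the CL sets being disjoint), and $l\le n$; by Lem.~\ref{lem:ML-num}, $|C_1|\le k$; and --- the key structural fact --- $|C|=O(k)$ at every moment of the run. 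For this last fact I would use that the instance is feasible wrt $opt$, so each $Y_i$ has its members in $|Y_i|$ distinct optimal clusters of diameter $\le 2opt$; hence when Stage~1(b) processes $Y_i$ the deficiency of $G(Y_i,C)$, equal to the number of points it appends, is at most the number of $Y_i$-members whose optimal cluster is not yet hit by $C$, and summing this over the CL sets together with $|C_1|\le k$ bounds $|C|$ by $O(k)$ before Stage~2 begins; since Stage~2 only removes points from $C_2$, $|C|=O(k)$ persists.

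Given these bounds, Stage~1(a) is cheap: Alg.~\ref{alg:ML} makes one pass over the $n$ points, evaluating $d(p,C)$ or $\max_{x\in X}d(x,C)$ for the ML set $X\ni p$, which (after precomputing $\max_{x\in X_j}d(x,c)$ for all $c\in C$, $X_j\in\mathcal{X}$ in $O(k\sum_j|X_j|)=O(nk)$) costs $O(nk)$ in total. Stage~1(b) processes $l\le n$ CL sets; for $Y_i$ it spends $O(n|C|)=O(nk)$ on the reweighting $\min_{c\in C}\max_{x\in X_j}d(x,c)$, $O(|C|\cdot|Y_i|)=O(k^2)$ on building $G(Y_i,C)$, and $O(|E|\sqrt{|V|})=O(k^2\sqrt{k})$ on a Hopcroft--Karp maximum-cardinality matching, for $O(n^2+nk^{2.5})$ over all CL sets --- dominated by Stage~2.

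For Stage~2 I would write the cost as a product of three factors. First, the number of iterations of the \texttt{while}-loop is $O(k)$: each executed enhanced single swap deletes $\{u,v\}\subseteq C_2$ and inserts one point, so $|C_2|$ strictly decreases, and $|C_2|=O(k)$ from above. Second, each iteration enumerates the candidate triples $(p,\{u,v\})$ with $p\in V_{\mathcal{Y}}$ and $\{u,v\}\subseteq C_2$, of which there are $O(n\,|C_2|^2)=O(nk^2)$. Third, each candidate is tested for feasibility of $C_1\cup C_2\cup\{p\}\setminus\{u,v\}$: the covering condition costs $O(n|C|)=O(nk)$, and --- using disjointness, since $p,u,v$ meet only $O(1)$ of the CL sets --- only $O(1)$ of the threshold-based matchings of Def.~\ref{LMI} can change, each recomputed by Hopcroft--Karp in $O(k^2\sqrt{k})$, so one feasibility test costs $O(nk+k^{2.5})$. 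Multiplying, Stage~2 runs in $O\!\big(k\cdot nk^2\cdot(nk+k^{2.5})\big)=O(n^2k^4+nk^{5.5})$, and since $k\le n$ this is $O(n^2k^{4.5})$; adding the subdominant Stages~1(a) and~1(b) gives the claimed total.

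The step I expect to be the crux is the size bound $|C|=O(k)$ throughout the run, and in particular the bound on $|C_2|$ at the end of Stage~1(b), before local search shrinks it; without it both the iteration count and the candidate enumeration in Stage~2 inflate and the claimed bound fails. The subtlety is that a maximum matching may leave a vertex lying in an already-centred optimal cluster unmatched, so ``one center per optimal cluster'' is not literally true, and the argument must instead bound the \emph{total} deficiency summed over all CL sets, using only that the members of each CL set sit in distinct optimal clusters of diameter $\le 2opt$. The remaining ingredients --- that a single swap disturbs only $O(1)$ of the matchings (disjointness), the Hopcroft--Karp running times, and the final arithmetic absorbing $n^2k^4+nk^{5.5}$ into $n^2k^{4.5}$ via $k\le n$ --- are routine.
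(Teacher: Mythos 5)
Your overall accounting mirrors the paper's: Stage~1(a) in $O(nk)$, Stage~1(b) via Hopcroft--Karp matchings, and Stage~2 bounded as (number of iterations) $\times$ (number of candidate triples) $\times$ (cost of one feasibility test), with the same $O(k)$ and $O(nk^2)$ factors. Two of your steps, however, do not hold as written. First, the claim that ``only $O(1)$ of the threshold-based matchings can change'' per swap is incorrect: the matchings of Definition~\ref{LMI} are between $Y\setminus\Gamma$ and $\Gamma\setminus Y$, so deleting $u$ and $v$ removes them from the \emph{center} side of the auxiliary bipartite graph of \emph{every} CL set, and $u$ (or $v$) may serve as a matched center in the matchings of arbitrarily many of the $l$ sets; disjointness only controls which CL sets contain $p,u,v$ as constrained points, not which matchings use them as centers. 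The paper instead simply recomputes the matchings of all CL sets per feasibility test, in $O(nk^{1.5})$ total (Hopcroft--Karp, with $\sum_Y |Y|\le n$); plugging that in, your product becomes $O(k\cdot nk^{2}\cdot nk^{1.5})=O(n^{2}k^{4.5})$ directly, without needing the $k\le n$ absorption, so this flaw is repairable and does not change the final bound (alternatively, one augmenting-path repair per CL set gives $O(nk)$ per test).

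Second, the crux you yourself single out --- that $|C_2|=O(k)$ when Stage~2 begins, which is what legitimizes both the $O(k)$ iteration count and the $O(nk^{2})$ candidate count --- is not actually established by your sketch. Bounding the deficiency of $G(Y_i,C)$ by the number of $Y_i$-members whose optimal cluster is not yet hit by $C$ is fine, but these per-set bounds do not telescope: a maximum matching may leave unmatched (hence append to $C$) a point lying in an already-hit cluster while matching the points of the unhit clusters to centers across clusters; then no new cluster becomes hit, and later CL sets can charge the very same unhit clusters again, so ``summing the deficiencies'' does not obviously stop at $O(k)$. You flag exactly this subtlety but then assert the summation goes through without carrying it out. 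In fairness, the paper's own proof is no stronger here --- it asserts $O(k)$ iterations and $O(nk^{2})$ swap pairs without justifying the size of $C_2$ produced by Stage~1(b) --- so your write-up matches the published argument in structure, but as a standalone proof this step remains open and is where real work (e.g., a charging argument limiting how many ``wasted'' centers an already-hit optimal cluster can absorb) would be required.
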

\begin{proof}
 Firstly, Stage 1(a) of Alg.~\ref{alg:k-center} takes $O(kn)$ to find the centers without considering cannot links regarding $2opt$; secondly, Stage 1(b) takes $O((|Y|+|C|)^{1/2}\cdot(|Y|\cdot|C|))=O(|Y|\cdot k^{1.5})$ time for computing the maximum matching for each $Y$ using~\citeauthor{hopcroft1973n}. Since there are $\sum_{ Y\in \mathcal{Y}}|Y|=n$ points in total, this stage consumes $O(nk^{{3/2}})$ time overall. For Stage 2, the while-loop repeats for $O(k)$ iterations, and each iteration computes $O(nk^2)$ swap pairs and maximum matching for checking the feasibility of the updated set $C_2\cup\{p\}\setminus\{u,v\}$. Each such feasibility check takes  $O(nk^{{1.5}})$ time across all CL sets. So the total runtime is $O(n^2k^{{4.5}})$. 
\end{proof}
\subsection{Without Knowing $opt$}
By integrating Alg.~\ref{alg:k-center} with the algorithm and analysis of Sec.~IV.B in \citet{guo2025near}, we immediately recover the same factor 2 guarantee without knowing~$opt$. For clarity, we restate this method.
Although we do not have the exact value of $opt$ and the exact value of $opt$  might require exponential time to compute, we can achieve the same ratio based on the basic observation as in the following lemma.
\begin{lem}\label{lem:rlays}
Let $\Psi=\{d_{ij}\vert p_{i},p_{j}\in P\}$, where $d_{ij} = d(p_i,p_j)$. Then we have $opt\in \Psi$. In other words, the optimal radius $opt$  must be a distance between two points of~$P$. 
\end{lem}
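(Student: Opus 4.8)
The plan is to exploit the discreteness of the formulation: because we work with the \emph{discrete} $k$-center, any optimal (constrained) center set is itself a subset of $P$, so the radius it realizes is automatically a distance between two points of $P$.

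In detail, I would first fix an optimal constrained center set $C^\star\subseteq P$ with $\max_{p\in P} d(p,C^\star)=opt$. Next, choose a \emph{farthest} point $p^\star\in P$, i.e.\ one attaining $d(p^\star,C^\star)=opt$, and let $c^\star\in C^\star$ be a center nearest to $p^\star$, so that $d(p^\star,c^\star)=d(p^\star,C^\star)=opt$. Since $p^\star\in P$ and $c^\star\in C^\star\subseteq P$, the pair $(p^\star,c^\star)$ witnesses $opt=d(p^\star,c^\star)\in\Psi$, which is exactly the claim. The whole argument is a single unwinding of the definition of the covering radius together with $C^\star\subseteq P$.

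The only points needing care are bookkeeping rather than substantive. If $opt=0$, one either reads $\Psi$ as permitting $i=j$ (so $0=d_{ii}\in\Psi$) or observes that the instance is trivial; and one should make explicit that the argument hinges on $C^\star\subseteq P$, which is precisely why it fails for the continuous $k$-center. I do not anticipate a genuine obstacle here. The payoff, which I would phrase as a remark, is algorithmic: one can enumerate the at most $\binom{n}{2}$ candidate values of $\Psi$ in increasing order, invoke Alg.~\ref{alg:k-center} with each value as the threshold $\eta$, and return the first solution with $|C|\le k$; by Thm.~\ref{thm:correctness} the accepted threshold satisfies $\eta\le 2opt$, so the returned clustering is a $2$-approximation even without knowing $opt$ in advance.
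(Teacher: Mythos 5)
Your route is genuinely different from the paper's and, after one correction, it works. The paper argues by contradiction: if $opt\notin\Psi$, it rounds down to $r=\max\{d_{ij}\in\Psi \mid d_{ij}<opt\}$ and observes that any clustering feasible within radius $opt$ is already feasible within radius $r<opt$ (no pairwise distance lies in between), contradicting optimality. You instead exhibit a direct witness pair, which is shorter and arguably cleaner; both arguments rest only on discreteness. The one substantive correction concerns your witness: in the \emph{constrained} problem the cost of a solution is $\max_{p\in P} d(p,C(p))$ over a feasible assignment $C(\cdot)$, not $\max_{p\in P}d(p,C^\star)=\max_{p}\min_{c\in C^\star}d(p,c)$ --- CL constraints can force a point to be served by a center other than its nearest one, so there need not exist an optimal center set with $\max_{p}d(p,C^\star)=opt$, and your pair $(p^\star,c^\star)$ with $c^\star$ a \emph{nearest} center may witness a value strictly smaller than $opt$. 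The fix is immediate and preserves your approach: take an optimal solution (center set plus feasible assignment), let $p^\star$ attain $\max_{p}d(p,C(p))$, and use the \emph{assigned} center $C(p^\star)\in C^\star\subseteq P$ as the second point, so that $opt=d(p^\star,C(p^\star))\in\Psi$; your $opt=0$ caveat is then handled since $d_{ii}\in\Psi$. Your closing algorithmic remark matches how the paper uses the lemma, except that the threshold fed to Alg.~\ref{alg:k-center} should be $2r$ for each candidate $r\in\Psi$ taken in increasing order (the first accepted $r$ satisfies $r\le opt$ by Thm.~\ref{thm:correctness}), rather than $r$ itself.
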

\begin{proof}
Suppose the lemma is not true. Then there must exist at least a distance in  $\Psi$  not larger than $opt$, since otherwise, any pair of points cannot be in the same cluster. Let $r=\max\{d_{ij}\mid d_{ij}\in\Psi,\,d_{ij}<opt\}$. Then under distances $r$ and $opt$, every center can cover an identical set of points in the same manner,  because every point covered by center $i$ under distance  $opt$ can also be covered by  $i$ under distance $r$ (as $d_{ij}\leq r$ if $d_{ij}<opt$), and vice versa. So $r<opt$ can be used as a smaller radius of the clustering, a contradiction.
\end{proof}
That is, we need only to find the smallest $r\in \Psi$, such that under distance bound $2r$, our algorithm can successfully return $C$ with $\vert C\vert\leq k$. Clearly, such $r$ is bounded by $opt$ because of Thm.~\ref{thm:correctness}.
\section{Experimental Evaluation}
\label{sec:exp}
In our experiments, we evaluate clustering performance on $3$ real-world datasets and a synthetic dataset. For the synthetic dataset, we report the empirical approximation ratio against baselines. Next, we examine how performance varies as we adjust the constraint ratio and the repetition ratio of intersected CL sets. The details on clustering runtime and experiments with varying $k$ will be provided in the full version.

\subsection{Configurations}
\label{sec:exp_config}
\descr{Datasets.} We use three UCI datasets (Cnae-9~\cite{cnae-9_233}, Skin~\cite{skin_segmentation_229} and Covertype~\cite{covertype_31}) to evaluate the clustering quality following the previous works. Due to the $\mathcal{NP}$-hardness of constrained $k$-center, we use the simulated datasets ($k = 50$) provided by~\citet{guo2024efficient} to evaluate the empirical approximation ratio. The details of the datasets can be found in the full version. 

\descr{Constraint Construction.} 
We construct the constraints following the methodology in Sec. V.A of~\citet{guo2025near}. Specifically, we use both intersected and disjoint CL/ML constraints on real-world datasets to evaluate clustering performance, and disjoint CL/ML constraints on synthetic datasets to assess the empirical approximation ratio of our algorithm relative to baseline methods.
\begin{figure}[t]
\centering
\includegraphics[width=\linewidth]{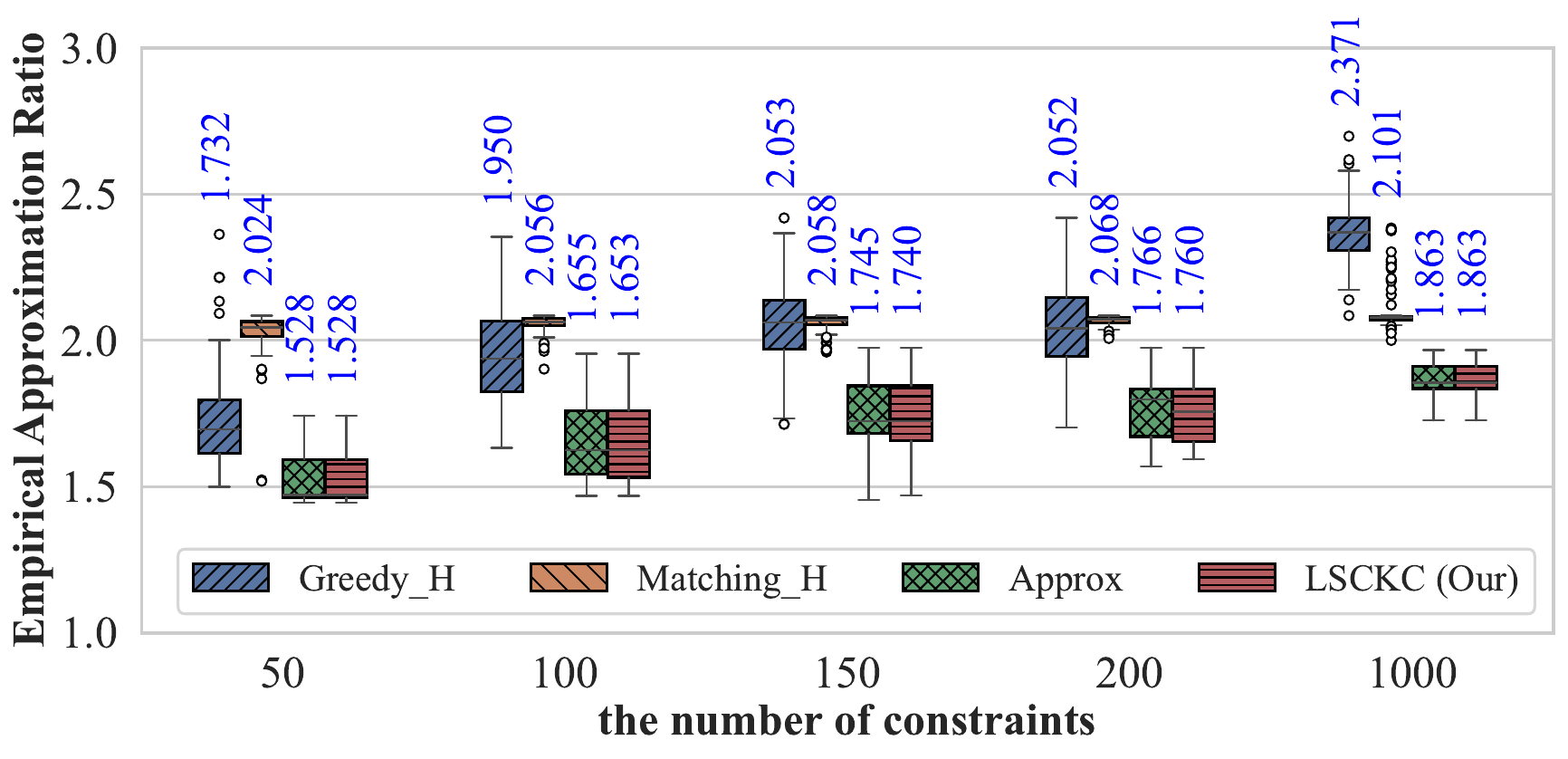}
\caption{Empirical approximation ratio on the synthetic dataset with $k = 50$. (The blue values indicate the average clustering performance for each algorithm with varying the number of constraints.)}
\label{fig:simulated-data-cost}
\end{figure}

\begin{figure*}[!t]
\centering
\subfloat[Disjoint CL/ML $k$-center clustering with varying constraint ratios.]{\includegraphics[width = 0.97\textwidth]{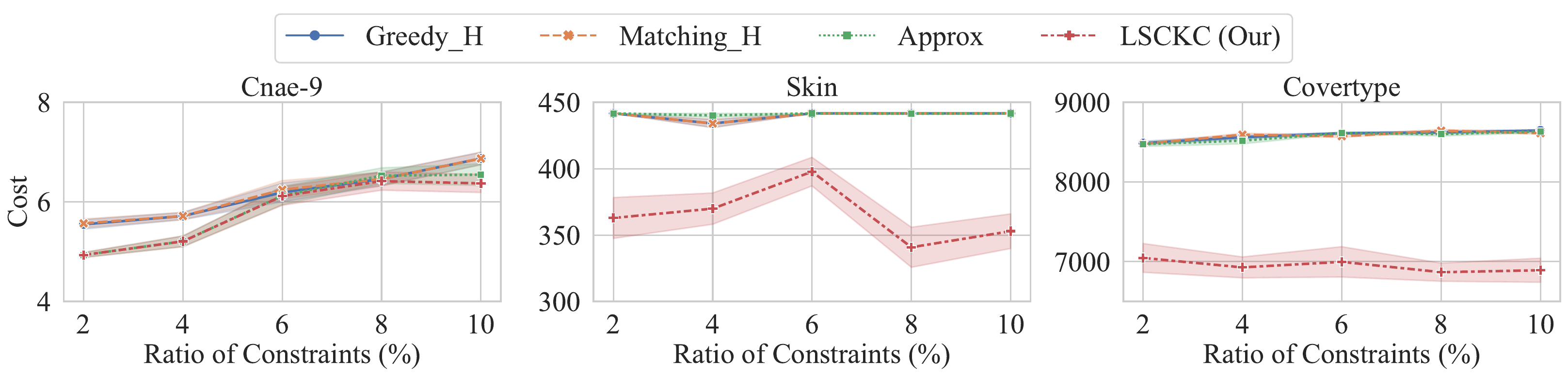}
\label{subfig:varying_constr}
}
\hfill
\subfloat[Intersected CL/ML $k$-center clustering with varying repetition ratios at a 10\% constraint level.]{\includegraphics[width = 0.97\textwidth]{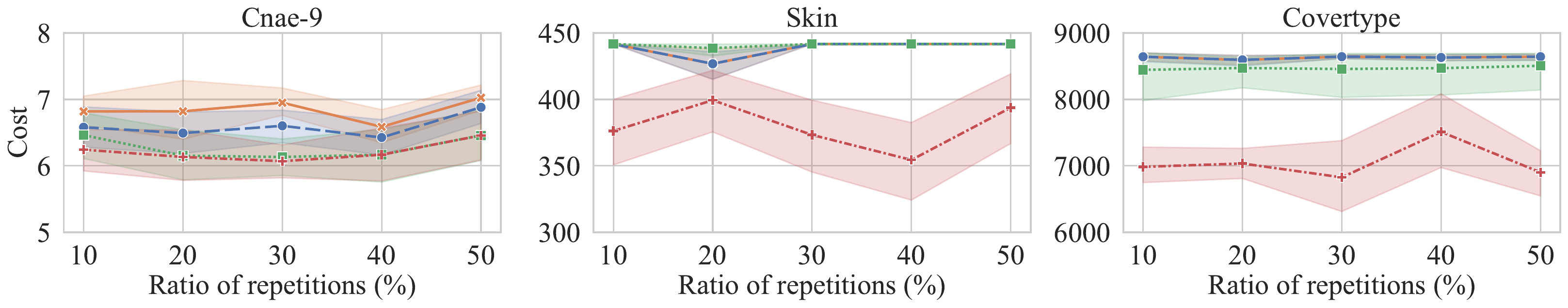}
\label{subfig:varying_repeti}
}
\caption{Cost of the constrained $k$-center clustering.}
\label{fig:cost_real}
\end{figure*}

\descr{Algorithms.}
We compare our method with two heuristic algorithms (referred to as \textit{Greedy\_H} and \textit{Matching\_H}) and an approximation algorithm (called \textit{Approx}) introduced by~\citet{guo2024efficient} as baselines. Moreover, we evaluate the performance of our local search constrained $k$-center algorithm (denoted as \textit{LSCKC}), as detailed in Alg.~\ref{alg:k-center}.

\descr{Evaluation Metrics.} In line with previous studies, we employ the clustering quality metrics for the experiments, which include \textit{Cost} and running time.

\descr{Experimental Settings.} 
With disjoint CL/ML constraints, we varied the proportion of constrained points to evaluate the clustering cost. In the intersected CL/ML constraints, we initially selected a fraction (10\%) of points from the dataset, then sampled $x\%$ of those points, where the value $x\%$ ranges from 0\% to 50\%  (plotted on the x‑axis). Each configuration was repeated at least 20 times, and we reported the average results. All algorithms were implemented in Java 1.8.0 and executed on an Apple M1 Max CPU with 32 GB of RAM.

\subsection{Experimental Results and Analysis}
\label{subsec:exp_ra}

\subsubsection{Empirical Approximation Ratio\\}
Fig.~\ref{fig:simulated-data-cost} illustrates the empirical worst-case approximation ratios calculated from the experiments of Approx, Matching\_H, and Greedy\_H~\cite{guo2025near} and our algorithm (LSCKC) on the simulated dataset. Since the simulated dataset was constructed with a given optimal radius $opt$, we report the maximum empirical approximation ratios as $r_{\text{max}}/opt$, where $r_{\text{max}}$ is the maximum cluster radius (cost function of the constrained $k$-center problem) in the clustering result of each algorithm. Based on the results under the different ratios of CL/ML constraint sets, we confirm that both Approx and LSCKC (Alg.~\ref{alg:k-center}) consistently attain the maximum empirical approximation ratios under 2, \textit{which matches our theoretical result proved in Thm.~\ref{thm:correctness}.}

In addition, when comparing the average empirical approximation ratios with the baselines, which are indicated by the blue values in Fig.~\ref{fig:simulated-data-cost}, we observe that our local search algorithm (LSCKC) consistently achieves the minimum empirical approximation ratio. This improvement can be attributed to the swap steps in the local search process, which further refine the set of centers and effectively reduce the cost in the constrained $k$-center problem. Moreover, as the number of constraints increases, the average empirical approximation ratios also tend to grow. We attribute this phenomenon to the incremental addition of constraints, which reduces the feasible region and renders the optimization problem increasingly restrictive.

\paragraph{Clustering quality for the disjoint/intersected constraints} 
In Fig.~\ref{fig:cost_real}, we compare the clustering costs of LSCKC, Approx, Matching\_H, and Greedy\_H on three real-world datasets, considering both disjoint and intersected CL/ML constraints. In Subfig.~\ref{subfig:varying_constr}, we present clustering costs at varying disjoint constraint ratios from 2\% to 10\%, with a fixed number of clusters $k=30$. In Subfig.~\ref{subfig:varying_repeti}, we show clustering costs at varying the ratio of repetitions from 10\% to 50\% for the intersected constraints, with a fixed constraint ratio of 10\%. From these results, we conclude that \textit{LSCKC (Alg.~\ref{alg:k-center}) consistently achieves effective solutions for the constrained $k$-center problem on real-world datasets}. We attribute this improvement to the swap steps employed in our local search, which further optimize the clustering cost under constrained conditions. Notably, the baseline (Approx) mainly selects center sets based on CL constraints, while our local search steps enable the algorithm to identify better centers from points that are not restricted by the existing constraints. This flexibility allows our method to outperform the baselines. For example, in the Cnae-9 dataset, the advantage of our algorithm becomes more apparent with more constraints, since additional constraints further limit the solution space for the Approx baseline, but our local search can explore and select the centers from the unconstrained points.

According to Fig.~\ref{fig:cost_real}, we observe that LSCKC achieves better performance on the Skin and Covertype datasets. In our experimental setting, we adopt constraint generation methods following previous work, which results in constraint sets of relatively small size. We believe that the advantages of our algorithm are particularly evident on datasets with shorter constraint lengths, as handling constraints is more difficult in such cases, while our algorithm is well-suited for managing these challenges.

\section{Conclusion}
In this paper, we explored approximation algorithms for the constrained $k$-center problem incorporating instance-level cannot-link (CL) and must-link (ML) constraints. Although general CL constraints introduce inherent inapproximability challenges, we focused specifically on cases with disjoint CL constraints, following previous work, and developed a novel constant-factor approximation algorithm. Our proposed algorithm achieves an optimal approximation ratio of 2 by integrating a threshold-based strategy within a local search framework. Experimental evaluations on both real-world (including scenarios with disjoint/intersected constraints) and synthetic datasets demonstrate the superior performance of our local search approximation algorithm.

\section*{Acknowledgments}

This work is supported by National Natural Science Foundation of China (No. 12271098) and Key Project of the Natural Science Foundation of Fujian Province  (No. 2025J02011).

\bibliography{aaai2026}
\end{document}